\def\eqref#1{equation~\ref{#1}}
\def\1{\bm{1}}
\def\vtheta{{\bm{\theta}}}
\def\vx{{\bm{x}}}
\DeclareMathAlphabet{\mathsfit}{\encodingdefault}{\sfdefault}{m}{sl}
\SetMathAlphabet{\mathsfit}{bold}{\encodingdefault}{\sfdefault}{bx}{n}
\theoremstyle{definition}
\newtheorem{definition}{Definition}[section]
\newtheorem{theorem}{Theorem}[section]
\newcommand{\Break}{\State \textbf{break} }
\definecolor{mypink}{HTML}{FD8A8A}
\definecolor{mypurple}{HTML}{A459D1}
\newcommand{\method}{RiFT\xspace}
\newcommand{\eq}[1]{Eq.~(\ref{#1})}
\ificcvfinal\pagestyle{empty}\fi
\begin{document}
\title{Improving Generalization of Adversarial Training via \\Robust Critical Fine-Tuning}

\author{ \textbf{Kaijie Zhu}$^{1,2}$, \textbf{Jindong Wang}$^3$, \textbf{Xixu Hu}$^4$, \textbf{Xing Xie}$^3$, \textbf{Ge Yang}$^{1,2}$ \thanks{Corresponding author: Ge Yang $<$ge.yang@ia.ac.cn$>$.} \\
 $^1$School of Artifical Intelligence, University of Chinese Academy of Sciences ~~ \\ $^2$Institute of Automation, CAS ~~ $^3$ Microsoft Research ~~ $^4$ City University of Hong Kong
}

\maketitle
\ificcvfinal\thispagestyle{empty}\fi



\begin{abstract}
Deep neural networks are susceptible to adversarial examples, posing a significant security risk in critical applications. Adversarial Training (AT) is a well-established technique to enhance adversarial robustness, but it often comes at the cost of decreased generalization ability. This paper proposes Robustness Critical Fine-Tuning (\textbf{\method}), a novel approach to enhance generalization \emph{without} compromising adversarial robustness. The core idea of \method is to exploit the redundant capacity for robustness by fine-tuning the adversarially trained model on its non-robust-critical module. To do so, we introduce \emph{module robust criticality (MRC)}, a measure that evaluates the significance of a given module to model robustness under worst-case weight perturbations. Using this measure, we identify the module with the lowest MRC value as the non-robust-critical module and fine-tune its weights to obtain fine-tuned weights. Subsequently, we linearly interpolate between the adversarially trained weights and fine-tuned weights to derive the optimal fine-tuned model weights. We demonstrate the efficacy of \method on ResNet18, ResNet34, and WideResNet34-10 models trained on CIFAR10, CIFAR100, and Tiny-ImageNet datasets. Our experiments show that \method can significantly improve both generalization and out-of-distribution robustness by around $1.5$\% while maintaining or even slightly enhancing adversarial robustness. Code is available at \url{https://github.com/microsoft/robustlearn}.
\end{abstract}



\section{Introduction}

\begin{figure}[ht]
\centering
\includegraphics[width=0.5\textwidth]{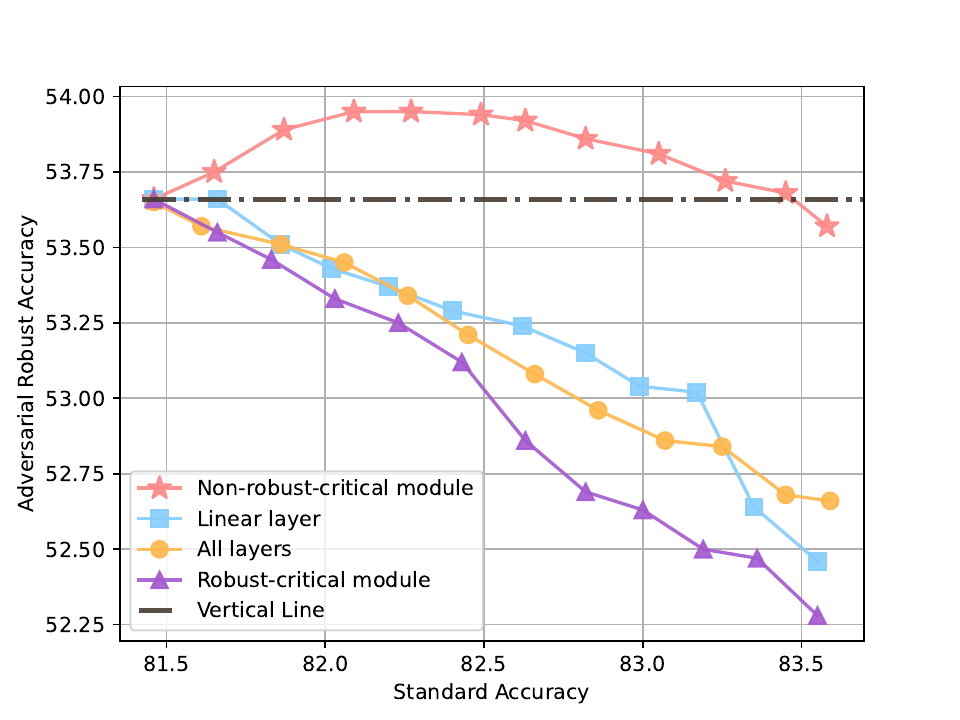}
\caption{Interpolation results of fine-tuning on different modules of ResNet18 on CIFAR10 dataset. 
Dots denote different interpolation points between the final fine-tuned weights of \method and the initial adversarially trained weights. 
All fine-tuning methods improve the generalization ability, but only fine-tuning on the non-robust-critical module ({\color{mypink}{\ttfamily layer2.1.conv2}}) can preserve robustness. Additionally, fine-tuning on robust-critical module ({\color{mypurple}{\ttfamily layer4.1.conv1}}) causes the worst trade-off between generalization and robustness. In the initial interpolation stage, fine-tuning on non-robust-critical modules enhances adversarial robustness by around $0.3\%$.}
\label{fig: interpolation}
\end{figure}

The pursuit of accurate and trustworthy artificial intelligence systems is a fundamental objective in the deep learning community.
Adversarial examples \cite{szegedy2013intriguing, goodfellow15}, which perturbs input by a small, human imperceptible noise that can cause deep neural networks to make incorrect predictions, pose a significant threat to the security of AI systems.
Notable experimental and theoretical progress has been made in defending against such adversarial examples \cite{carlini17, athalye18, cohen19c, hendrycks2019using, aa2020, gowal21generated, rebuffi2021fixing}.
Among various defense methods \cite{xie2018mitigating, Pang2020Rethinking, zhang20featurescattering, mustafa2019hidden, Chan2020Jacobian}, adversarial training (AT) \cite{madry2018towards} has been shown to be one of the most promising approaches~\cite{athalye18, aa2020} to enhance the adversarial robustness.
However, compared to standard training, AT severely sacrifices generalization on in-distribution data \cite{schmidt18moredata, tsipras2018robustness, trades, raghunathan20a, score} and is exceptionally vulnerable to certain out-of-distribution (OOD) examples \cite{gilmer2019adversarial, yi21improved, kireev2022effectiveness} such as Contrast, Bright and Fog, resulting in unsatisfactory performance.

Prior arts tend to mitigate the trade-off between generalization and adversarial robustness within the adversarial training procedure.
For example, some approaches have explored reweighting instances \cite{zhang2021geometryaware}, using unlabeled data \cite{raghunathan20a}, or redefining the robust loss function \cite{trades, mart, awp, score}.
In this paper, we take a different perspective to address such a trade-off by leveraging the \emph{redundant capacity for robustness} of neural networks after adversarial training.
Recent research has demonstrated that deep neural networks can exhibit \emph{redundant capacity for generalization} due to their complex and opaque nature, where specific network modules can be deleted, permuted \cite{veit2016residual}, or reset to their initial values \cite{zhang2017understanding, chatterji2020The} with only minor degradation in generalization performance.
Hence, it is intuitive to ask: \textit{Do adversarially trained models have such redundant capacity? If so, how to leverage it to improve the generalization and OOD robustness \footnote{Here, generalization refers to generalization to in-distribution (ID) samples, and OOD robustness refers to generalization to OOD samples.} while maintaining adversarial robustness?}

Based on such motivation, we introduce a new concept called \emph{Module Robust Criticality (MRC)} \footnote{In our paper, a module refers to a layer of the neural network.} to investigate the redundant capacity of adversarially trained models for robustness.
MRC aims to quantify the \emph{maximum increment of robustness loss} of a module's parameters under the \emph{constrained weight perturbation}.
As illustrated in \figurename~\ref{fig: module robust criticality}, we empirically find that certain modules do exhibit redundant characteristics under such perturbations, resulting in negligible drops in adversarial robustness.
We refer to the modules with the lowest MRC value as the \emph{non-robust-critical modules}.
These findings further inspire us to propose a novel fine-tuning technique called \textbf{R}obust Cr\textbf{i}tical \textbf{F}ine-\textbf{T}uning (\method), which aims to leverage the redundant capacity of the non-robust-critical module to improve generalization while maintaining adversarial robustness.
\method consists of three steps: (1) Module robust criticality characterization, which calculates the MRC value for each module and identifies the non-robust-critical module.
(2) Non-robust-critical module fine-tuning, which exploits the redundant capacity of the non-robust-critical module via fine-tuning its weights with standard examples.
(3) Mitigating robustness-generalization trade-off via interpolation, which interpolates between adversarially trained parameters and fine-tuned parameters to find the best weights that maximize the improvement in generalization while preserving adversarial robustness.

Experimental results demonstrate that \method significantly improves both the generalization performance and OOD robustness by around $2$\% while maintaining or even improving the adversarial robustness of the original models.
Furthermore, we also incorporate \method to other adversarial training regimes such as TRADES \cite{trades}, MART \cite{mart}, AT-AWP \cite{awp}, and SCORE \cite{score}, and show that such incorporation leads to further enhancements. More importantly, our experiments reveal several noteworthy insights.
\emph{First}, we found that fine-tuning on non-robust-critical modules can effectively mitigate the trade-off between adversarial robustness and generalization, showing that these two can both be improved (Section \ref{sec: 5.3 main results}).
As illustrated in \figurename~\ref{fig: interpolation}, adversarial robustness increases alongside the generalization in the initial interpolation procedure, indicating that the features learned by fine-tuning can benefit both generalization and adversarial robustness. 
This contradicts the previous claim~\cite{tsipras2018robustness} that the features learned by optimal standard and robust classifiers are fundamentally different.
\emph{Second}, the existence of non-robust-critical modules suggests that current adversarial training regimes do not fully utilize the capacity of DNNs (Section~\ref{sec 5.2: mrc analysis}).
This motivates future work to design more efficient adversarial training approaches using such capacity.
\emph{Third}, while previous study \cite{kumar2022finetuning} reported that fine-tuning on \emph{pre-train models} could distort the learned robust features and result in poor performance on OOD samples, we find that fine-tuning \emph{adversarially trained models} do \emph{NOT} lead to worse OOD performance (Section \ref{sec: 5.3 main results}).

The contribution of this work is summarized as follows:
\begin{enumerate}
    \item \textbf{Novel approach.} We propose the concept of module robust criticality and verify the existence of redundant capacity for robustness in adversarially trained models. We then propose \method to exploit such redundancy to improve the generalization of AT models.
    \item \textbf{Superior experimental results.} Our approach improves both generalization and OOD robustness of AT models by around $2$\%. It can also be incorporated with previous AT methods to mitigate the trade-off between generalization and adversarial robustness.
    \item \textbf{Interesting insights.} The findings of our experiments shed light on the intricate interplay between generalization, adversarial robustness, and OOD robustness. Our work emphasizes the potential of leveraging the redundant capacity in adversarially trained models to improve generalization and robustness further, which may inspire more efficient and effective training methods to fully utilize this redundancy.
\end{enumerate}


\section{Related Work}

\paragraph{Trade-off between adversarial robustness and generalization}

The existence of such trade-off has been extensively debated in the adversarial learning community \cite{schmidt18moredata, tsipras2018robustness, trades, javanmard20precise, raghunathan20a, score}.
Despite lingering controversies, the prevalent viewpoint is that this trade-off is inherent. 
Theoretical analyses \cite{tsipras2018robustness, raghunathan20a, javanmard20precise} demonstrated that the trade-off provably exists even in simple cases, \eg, binary classification and linear regression. 
To address this trade-off, various methods have been proposed during adversarial training, such as instance reweighting \cite{zhang2021geometryaware}, robust self-training \cite{raghunathan20a}, incorporating unlabeled data \cite{carmon19unlabel, hendrycks2019using}, and redefining the robust loss function \cite{trades, mart, awp, score}. 
This paper presents a novel post-processing approach that exploits the excess capacity of the model after adversarial training to address such trade-off.
Our \method can be used in conjunction with existing adversarial training techniques, providing a practical and effective way to mitigate the trade-off further.

\paragraph{Redundant Fitting Capacity}
The over-parameterized deep neural networks (DNNs) exhibit striking fitting power even for random labels \cite{zhang2017understanding, arpit2017closer}. 
Recent studies have shown that not all modules contribute equally to the generalization ability of DNNs \cite{veit2016residual, rosenfeld2019intriguing, zhang2019all, chatterji2020The}, indicating the redundant fitting capacity for generalization. Veit \etal \cite{veit2016residual} found that some blocks can be deleted or permuted without degrading the test performance too much. 
Rosenfeld and Tsotsos \cite{rosenfeld2019intriguing} demonstrated that one could achieve comparable performance by training only a small fraction of network parameters.
Further, recent studies have identified certain neural network modules, referred to as \textit{robust modules} \cite{zhang2019all, chatterji2020The}, rewinding their parameters to initial values results in a negligible decline in generalization. Previous studies have proposed methods to reduce the computational and storage costs of deep neural networks by \emph{removing} the redundant capacity for generalization while preserving comparable performance, such as compression \cite{han2015deep} and distillation \cite{hinton2015distilling}. In contrast, our work focuses on the \emph{redundant capacity for robustness} of adversarially trained models and tries to \emph{exlpoit} such redundancy.

\paragraph{Fine-tuning Methods}
Pre-training on large scale datasets has been shown to be a powerful approach for developing high-performing deep learning models \cite{brown2020language, dosovitskiy2021an, radford2019language, kolesnikov2020big}. Fine-tuning is a widely adopted approach to enhance the transferability of pre-trained models to downstream tasks and domain shifts. Typically, fine-tuning methods involve fine-tuning the last layer (linear probing) \cite{aghajanyan2021better, kumar2022finetuning} or all layers (fully fine-tuning) \cite{aghajanyan2021better, hendrycks2019using, miller21bline, kumar2022finetuning}. Salman \etal \cite{salman2020adversarially} demonstrated that both fully fine-tuning and linear probing of adversarially trained models can improve the transfer performance on downstream tasks.
Nevertheless, recent studies \cite{andreassen2021evolution, wiseft, kumar2022finetuning} have suggested that fine-tuning can degrade pre-trained features and underperformance on out-of-distribution (OOD) samples.
To address this issue, different fine-tuning techniques are proposed such as WiSE-FT~\cite{wiseft} and surgical fine-tuning~\cite{lee2023surgical} that either leveraged ensemble learning or selective fine-tuning for better OOD performance.
Kumar \etal \cite{kumar2022finetuning} suggested the two-step strategy of linear probing then full fine-tuning (LP-FT) combines the benefits of both fully fine-tuning and linear probing.

\section{Module Robust Criticality}


Improving the generalization of adversarially trained models requires a thorough understanding of DNNs, which, however, proves to be difficult due to the lack of explainability.
Luckily, recent studies show that specific modules in neural networks, referred to as \textit{critical modules} \cite{zhang2019all, chatterji2020The}, significantly impact model generalization if their parameters are rewound to initial values.
In this work, we propose a metric called \textbf{Module Robust Criticality (MRC)} to evaluate the robustness contribution of each module explicitly.

\subsection{Preliminaries}
We denote a $l$-layered DNN as $f(\vtheta) = \phi(\vx^{(l)}; \vtheta^{(l)}) \circ \hdots \circ \phi(\vx^{(1)}; \vtheta^{(1)})$, where $\vtheta^{(i)}$ is the parameter of $i$-th layer and $\phi(\cdot)$ denotes the activation function.
We use $\vtheta_{AT}$ and $\vtheta_{FT}$ to denote the weights of the adversarially trained and fine-tuned model, respectively.
We use $\mathcal{D} = \{(\vx_1, y_1), ..., (\vx_n, y_n)\}$ to denote a dataset and $\mathcal{D}_{\mathit{std}}$ means a standard dataset such as CIFAR10. The cross-entropy loss is denoted by $\mathcal{L}$ and $\lVert \cdot \rVert_p$ is denoted as the $\ell_p$ norm. 

Let $\Delta \vx \in \mathcal{S}$ denote the adversarial perturbation applied to a clean input $\vx$, where $\mathcal{S}$ represents the allowed range of input perturbations. Given a neural network $f(\vtheta)$ and a dataset $\mathcal{D}$, adversarial training aims to minimize the robust loss \cite{madry2018towards} as: 
\begin{equation}
\label{eq-at}
\begin{split}
& \mathop{\arg \min} \limits_{\vtheta} \mathcal{R}(f(\vtheta), \mathcal{D}), \text{ where }\\
\mathcal{R}(f(\vtheta), \mathcal{D}) = &\sum \limits_{(\vx, y) \in \mathcal{D} } \max_{\Delta \vx \in \mathcal{S}} \mathcal{L} (f ( \vtheta, \vx + \Delta \vx), y).
\end{split}
\end{equation}

Here, $\mathcal{R}(f(\vtheta), \mathcal{D})$ is the robust loss to find the \emph{worst-case input perturbation} that maximizes the cross-entropy classification error.

\subsection{Module Robust Criticality}
\label{section 3.1}



\begin{definition}[Module Robust Criticality]
\label{def: mrc}
Given a weight perturbation scaling factor $\epsilon > 0$ and a neural network $f(\vtheta)$, the robust criticality of a module $i$ is defined as
\begin{align}
\label{eq: mrc}
    \mathit{MRC} (f, \vtheta^{(i)}, \mathcal{D}, \epsilon) = 
    &\max \limits_{\Delta \vtheta \in \mathcal{C}_{\vtheta}} \mathcal{R} (f (\vtheta+\Delta \vtheta), \mathcal{D}) \nonumber \\
    & - \mathcal{R} (f (\vtheta), \mathcal{D}),
\end{align} 
where $\Delta \vtheta = \{\mathbf{0}, \hdots, \mathbf{0}, \Delta \vtheta^{(i)}, \mathbf{0}, \hdots, \mathbf{0}\}$ denotes the weight perturbation with respect to the module weights $\vtheta^{(i)}$, $\mathcal{C}_{\vtheta} = \{ \Delta \vtheta \, \big|  \, \lVert \Delta \vtheta \rVert_p \leq \epsilon \lVert \vtheta^{(i)} \rVert_p \}$, $\mathcal{R}(\cdot)$ is the robust loss defined in \eq{eq-at}.
\end{definition}

The MRC value for each module represents how they are critically contributing to model adversarial robustness. The module with the lowest MRC value is considered redundant, as changing its weights has a negligible effect on robustness degradation. We refer to this module as the non-robust-critical module. Intuitively, MRC serves as an upper bound for weight changing of a particular module, as demonstrated in Theorem \ref{theorem: mrc}. Since we do not know the optimization directions and how they might affect the model robustness to adversarial examples, we measure the extent to which \emph{worst-case} weight perturbations affect the robustness, providing an upper bound loss for optimizing the weight.
Further, the MRC for a module depicts the sharpness of robust loss landscape \cite{awp, stutz2021relating} around the minima $\vtheta^{(i)}$. If the MRC score is high, it means that the robust loss landscape with respect to $\vtheta^{(i)}$ is sharp, and fine-tuning this module is likely to hurt the adversarial robustness. 

\begin{theorem}
\label{theorem: mrc}
The MRC value for a module $i$ serves as an upper bound for the robust loss increase when we optimize the module under constraint $\mathcal{C}_{\vtheta}$:
\begin{gather}
    \mathcal{R}( f(\vtheta^*), \mathcal{D}) - \mathcal{R}( f(\vtheta), \mathcal{D}) \leq \mathit{MRC} (f, \vtheta^{(i)}, \mathcal{D}, \epsilon), \nonumber  \\ 
    \text{where } \, \vtheta^* = \mathop{\arg \min} \limits_{\vtheta', (\vtheta' - \vtheta) \in \mathcal{C}_{\vtheta}} \sum \limits_{(\vx, y) \in \mathcal{D}} \mathcal{L}(f(\vtheta', x), y).
\end{gather}
\end{theorem}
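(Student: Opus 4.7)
The plan is to observe that Theorem \ref{theorem: mrc} is an immediate consequence of the definition of $\mathit{MRC}$ as a supremum over the constraint set $\mathcal{C}_{\vtheta}$, once one notices that the constrained standard-loss minimizer $\vtheta^{*}$ itself lies in that set. The overall approach is therefore: unpack the definitions, check that $\vtheta^{*}$ is feasible for the $\mathit{MRC}$ maximization, and then invoke the trivial ``any value $\leq$ supremum'' inequality.

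First, I would set $\Delta\vtheta^{*} := \vtheta^{*} - \vtheta$ and verify that $\Delta\vtheta^{*} \in \mathcal{C}_{\vtheta}$. This is immediate from the feasibility condition written beneath the $\arg\min$ defining $\vtheta^{*}$: the condition $(\vtheta' - \vtheta) \in \mathcal{C}_{\vtheta}$ forces $\Delta\vtheta^{*}$ to have the block-sparse structure $\{\mathbf{0}, \ldots, \Delta\vtheta^{*(i)}, \ldots, \mathbf{0}\}$ and to satisfy $\lVert \Delta\vtheta^{*} \rVert_{p} \leq \epsilon \lVert \vtheta^{(i)} \rVert_{p}$, which is exactly the condition appearing in the definition of $\mathit{MRC}$.

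Second, I would apply the defining inequality of the maximum. Since $\vtheta + \Delta\vtheta^{*} = \vtheta^{*}$ is a feasible point in the maximization defining $\mathit{MRC}$, we obtain
\begin{align*}
\mathcal{R}(f(\vtheta^{*}), \mathcal{D}) - \mathcal{R}(f(\vtheta), \mathcal{D}) & \leq \max_{\Delta\vtheta \in \mathcal{C}_{\vtheta}} \bigl[\mathcal{R}(f(\vtheta + \Delta\vtheta), \mathcal{D}) - \mathcal{R}(f(\vtheta), \mathcal{D})\bigr] \\
& = \mathit{MRC}(f, \vtheta^{(i)}, \mathcal{D}, \epsilon),
\end{align*}
which is exactly the claim.

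There is no real technical obstacle here: the argument is essentially tautological once the definitions are aligned. The only mild subtlety is notational, namely confirming that the block-sparsity structure implicit in $\mathcal{C}_{\vtheta}$ inside the definition of $\mathit{MRC}$ matches the feasible set of the minimization defining $\vtheta^{*}$. The substance of the theorem is interpretive rather than technical: it asserts that \emph{any} parameter update confined to the ball $\mathcal{C}_{\vtheta}$, regardless of the objective being minimized (in particular, the clean cross-entropy used by \method), cannot inflate the robust loss by more than the $\mathit{MRC}$ score --- which is precisely what justifies using $\mathit{MRC}$ as an a priori screening criterion for identifying non-robust-critical modules safe for standard-loss fine-tuning.
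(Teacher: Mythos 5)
Your proof is correct and follows exactly the same route as the paper's: observe that $\vtheta^{*}$ is by construction a feasible point of the constraint set $\mathcal{C}_{\vtheta}$, then apply the ``any feasible value $\leq$ maximum'' inequality from the definition of $\mathit{MRC}$. The paper states this slightly more tersely (it asserts the bound for all $\vtheta'$ with $(\vtheta'-\vtheta)\in\mathcal{C}_{\vtheta}$ and then specializes to $\vtheta^{*}$), but the content is identical.
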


\begin{proof} By the definition of MRC, for any weights $(\vtheta'-\vtheta) \in \mathcal{C}_{\vtheta}$, we have:
\begin{equation}
    \mathcal{R}( f(\vtheta'), \mathcal{D}) - \mathcal{R}( f(\vtheta), \mathcal{D}) \leq \mathit{MRC} (f, \vtheta^{(i)}, \mathcal{D}, \epsilon).
\end{equation}
Thus, for the optimized weights:
\begin{gather}
    \vtheta^* = \mathop{\arg \min} \limits_{\vtheta', (\vtheta' - \vtheta) \in \mathcal{C}_{\vtheta}} \sum \limits_{(\vx, y) \in \mathcal{D}} \mathcal{L}(f(\vtheta', x), y),
\end{gather}
it satisfies 
\begin{gather}
    \mathcal{R}( f(\vtheta^*), \mathcal{D}) - \mathcal{R}( f(\vtheta), \mathcal{D}) \leq \mathit{MRC} (f, \vtheta^{(i)}, \mathcal{D}, \epsilon).
\end{gather}
Such that the proof ends.
\end{proof}


\paragraph{Remark:}
The definition of MRC is similar in spirit to the work of Zhang \etal \cite{zhang2019all} and Chatterji \etal \cite{chatterji2020The}. However, 
MRC differs fundamentally from them in two aspects.
First, MRC aims to capture the influence of a module on \textit{adversarial robustness}, while Zhang \etal \cite{zhang2019all} and Chatterji \etal \cite{chatterji2020The} focus on studying the impact of a module on \textit{generalization}.
Second, MRC investigates the robustness characteristics of module weights under \textit{worst-case weight perturbations}, whereas Zhang \etal \cite{zhang2019all} and Chatterji \etal \cite{chatterji2020The} analyzed the properties of a module by \textit{rewinding its weights to their initial values}.
Similar to \cite{asam, stutz2021relating}, we define the weight perturbation constraint $C_{\vtheta}$ as a multiple of the $\ell_p$ norm of original parameters, which ensures the scale-invariant property and allows us to compare the robust criticality of modules across different layers, see Appendix~\ref{append-proof} for a detailed proof.

Theorem \ref{theorem: mrc} establishes a clear upper bound for fine-tuning particular modules. This theorem assures us that fine-tuning on non-robust-critical modules shouldn't harm the model robustness. However, it does not ascertain if fine-tuning the robust-critical module will lead to a significant decline in robust accuracy.

\subsection{Relaxation of MRC}

Optimizing in \eq{eq: mrc} requires simultaneously finding worst-case weight perturbation $\Delta \vtheta$ and worst-case input perturbation $\Delta \vx$, which is time-consuming.
Thus, we propose a relaxation version by fixing $\Delta \vx$ at the initial optimizing phase.
Concretely, we first calculate the adversarial examples $\Delta \vx$ with respect to $\vtheta_{AT}$. By fixing the adversarial examples unchanged during the optimization, we iteratively optimize the $\Delta \vtheta$ by gradient ascent method to maximize the robust loss to find the optimal $\Delta \vtheta$.
We set a weight perturbation constraint and check it after each optimization step. If the constraint is violated, we project the perturbation onto the constraint set. The pseudo-code is described in Algorithm \ref{alg: mrc calculation}.
In our experiments, if not specified, we set $\lVert \cdot \rVert_p = \lVert \cdot \rVert_2$ and $\epsilon = 0.1$ for $C_{\vtheta}$, the iterative step for optimizing $\Delta \vtheta$ is $10$.


\begin{algorithm}[t!]
\caption{Module Robust Criticality Characterization}
\label{alg: mrc calculation}
\begin{algorithmic}[1]
\Require neural network $f$, adversarially trained model weights $\vtheta_{AT}$, desired module $i$'s weights $\vtheta^{(i)}$, standard dataset $\mathcal{D}_{std}$, weight perturbation scaling factor $\epsilon$, optimization iteration steps $T$, learning rate $\gamma$.
\Ensure The module robust criticality of module $i$.

\State{Initialize adversarial dataset: $\mathcal{D}_{adv} = \{\}$} 
\For{Batch $\mathcal{B}_k \in \mathcal{D}_{std}$}  \Comment{Generate adversarial dataset}
\State{$\mathcal{B}_k^{adv}$ = PGD-10($\vtheta_{AT}, \mathcal{B}_k$)} 
\State{$\mathcal{D}_{adv} = \mathcal{D}_{adv} \bigcup \mathcal{B}_k^{adv} $}
\EndFor

\State{Freeze all parameters of $\vtheta_{AT}$ except for $\vtheta^{(i)}$}
\State{$\vtheta_{1} = \vtheta_{AT}$}
\For{$t = 1, \hdots, T$} \Comment{Iterate $T$ epochs}
\State{$\vtheta_{t+1} = \vtheta_{t}$}
\For{Batch $\mathcal{B}_k^{adv} \in \mathcal{D}_{adv}$}
\State{Calculate Loss: $\mathcal{L}(f, \vtheta_t, \mathcal{B}_k^{adv}))$}
\State{$\vtheta_{t+1} = \vtheta_{t+1} + \gamma \nabla_{\vtheta_t}(\mathcal{L})$} \Comment{Gradient Ascent}
\EndFor
\State{$\Delta \vtheta^{(i)} = \vtheta_{t+1}^{(i)} - \vtheta_{AT}^{(i)}$} \Comment{Check perturb constraint}
\If{$\lVert \Delta \vtheta^{(i)} \rVert_2 \geq \epsilon \lVert \vtheta_{AT}^{(i)} \rVert_2$}

\State{$\Delta \vtheta^{(i)} = \epsilon \frac{\lVert \vtheta_{AT}^{(i)} \rVert_2}{ \lVert \Delta \vtheta^{(i)} \rVert_2}  \Delta \vtheta^{(i)} $}   
\State{$\vtheta_{t+1} = \vtheta_{t} + \Delta \vtheta^{(i)}$}
\Break
\EndIf
\EndFor
\State{$\mathit{MRC}(\vtheta^{(i)}) = \mathcal{L}(f, \vtheta_{T}, \mathcal{D}_{\mathit{adv}}) - \mathcal{L}(f, \vtheta_{AT}, \mathcal{D}_{\mathit{adv}})$}

\State{\textbf{Return $\mathit{MRC}(\vtheta^{(i)})$}}
\end{algorithmic}
\end{algorithm}

\begin{figure*}[t!]
\centering
\includegraphics[width=\textwidth]{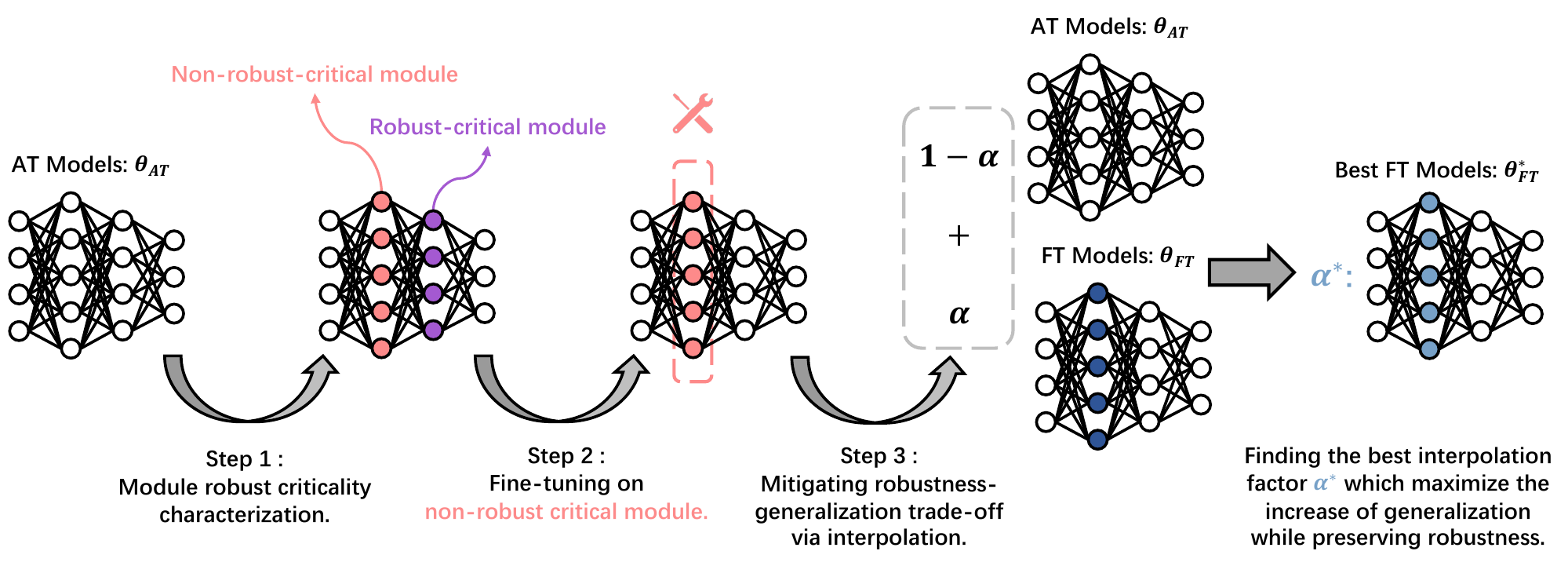}
\caption{The pipeline of our proposed Robust Critical Fine-Tuning (\method).}
\label{fig: rift}
\end{figure*}

\section{\method: Robust Critical Fine-tuning}
\label{sec: rift}
In this paper, we propose \textbf{\method}, a robust critical fine-tuning approach that leverages MRC to guide the fine-tuning of a deep neural network to improve both generalization and robustness.
Let $\mathcal{P}_{\mathit{adv}}(x, y)$ and $\mathcal{P}_{\mathit{std}}(x, y)$ denote the distributions of adversarial and standard inputs, respectively.
Then, applying an adversarially trained model on $\mathcal{P}_{\mathit{adv}}(x, y)$ to $\mathcal{P}_{\mathit{std}}(x, y)$ can be viewed as a \emph{distributional shift} problem.
Thus, it is natural for \method to exploit the redundant capacity to fine-tune adversarially trained models on the standard dataset.

Specifically, \method consists of three steps as shown in \figurename~\ref{fig: rift}.
First, we calculate the MRC of each module and choose the module with the lowest MRC score as our non-robust-critical module.
Second, we freeze the parameters of the adversarially trained model except for our chosen non-robust-critical module. Then we fine-tune the adversarially trained models on corresponding standard dataset $\mathcal{D}_{\mathit{std}}$.
Third, we linearly interpolate the weights of the original adversarially trained model and fine-tuned model to identify the optimal interpolation point that maximizes generalization improvement while maintaining robustness. 


\paragraph{Step 1: Module robust criticality characterization}

According to the Algorithm~\ref{alg: mrc calculation}, we iteratively calculate the MRC value for each module $\vtheta^{(i)} \in \vtheta_{AT}$, then we choose the module with the lowest MRC value, denoted as $\tilde{\vtheta}$:
\begin{equation}
    \tilde{\vtheta} = \vtheta^{(i)} \, \text{ where } \, i = \mathop{\arg \min} \limits_i {\mathit{MRC}(f, \vtheta^{(i)}, \mathcal{D}, \epsilon)}.
\end{equation}

\paragraph{Step 2: Fine-tuning on non-robust-critical modules}
Next, we freeze the rest of the parameters and fine-tune on desired parameters $\tilde{\vtheta}$.
We solve the following optimization problem by SGD with momentum \cite{sutskever2013importance}
\begin{align}
\label{eq: fine-tuning}
    \mathop{\arg \min} \limits_{\tilde{\vtheta}} \sum \limits_{(\vx, y) \in \mathcal{D}} \mathcal{L} ( f (x, (\tilde{\vtheta}; \vtheta \setminus \tilde{\vtheta})), y)  + \lambda  \lVert \tilde{\vtheta} \rVert_2,
\end{align}
where $\lambda$ is the $\ell_2$ weight decay factor.

\paragraph{Step 3: Mitigating robustness-generalization trade-off via interpolation}
 
For a interpolation coefficient $\alpha$, the interpolated weights is calculated as:
\begin{align}
\label{eq: interpolation}
    \vtheta_\alpha = (1-\alpha) \vtheta_{AT} + \alpha \vtheta_{FT},
\end{align}
where $\vtheta_{AT}$ is the initial adversarially trained weights and $\vtheta_{FT}$ is the fine-tuned weights obtained by \eq{eq: fine-tuning}. Since our goal is to improve the generalization while preserving adversarial robustness, thus the best interpolation point is chosen to be the point that most significantly improves the generalization while the corresponding adversarial robustness is no less than the original robustness by $0.1$.

\paragraph{Remark:}
Theorem \ref{theorem: mrc} establishes an upper bound on the possible drop in robustness loss that can be achieved through fine-tuning. It is expected that the second step of optimization would enforce the parameters to lie within the boundary $\mathcal{C}_{\vtheta}$ in order to satisfy the theorem. However, here we do not employ constrained optimization but find the optimal point by first optimizing without constraints and then interpolating. This is because (1) the constraints are empirically given and may not always provide the optimal range for preserving robustness, and it is possible to fine-tune outside the constraint range and still ensure that there is not much loss of robustness. (2) the interpolation procedure serves as a weight-ensemble, which may benefit both robustness and generalization, as noted in WiSE-FT \cite{wiseft}.
The complete algorithm of \method is shown in Appendix~\ref{append-algo}.



\section{Experiments}


\subsection{Experimental Setup}
\label{section: 4.1}

\begin{figure*}[ht]
\centering
\includegraphics[width=\textwidth]{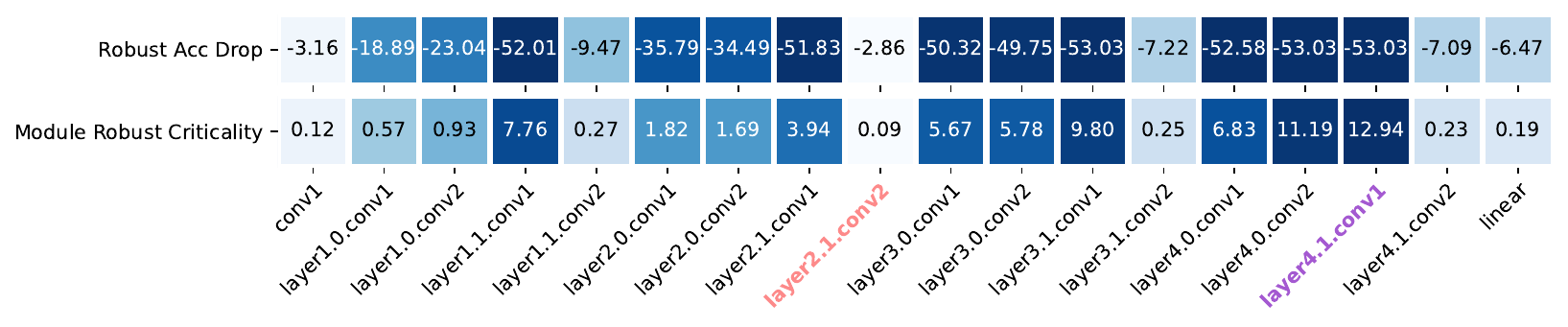}
\caption{Example of module robust criticality (MRC) and its corresponding robust accuracy drop of ResNet18 trained on CIFAR10. 
Each column represents an individual module. 
The first row represents the corresponding robust accuracy drop and the second row represents the MRC value of each module.
The higher the MRC value is, the more robust-critical the module is. 
Some modules are not critical to robustness, exhibiting redundant characteristics for contributing to robustness. 
However, some modules are critical to robustness. 
For example, the robust acc drop is only $2.86\%$ for {\color{mypink}{\ttfamily layer2.1.conv2}} while for {\color{mypurple}{\ttfamily layer4.1.conv1}} the robust acc drop is up to $53.03\%$. }
\label{fig: module robust criticality}
\end{figure*}

\paragraph{Datasets}
We adopt three popular image classification datasets: CIFAR10 \cite{cifar}, CIFAR100 \cite{cifar}, and Tiny-ImageNet \cite{tiny}.
CIFAR10 and CIFAR100 comprise 60,000 $32\times32$ color images in 10 and 100 classes, respectively. 
Tiny-ImageNet is a subset of ImageNet and contains $200$ classes, where each class contains 500 colorful images with size $64 \times 64$.
We use three OOD datasets accordingly to evaluate the OOD robustness: CIFAR10-C, CIFAR100-C, and Tiny-ImageNet-C \cite{hendrycks2019using}.
These datasets simulate $15$ types of common visual corruptions and are grouped into four classes: Noise, Blur, Weather, and Digital.

\paragraph{Evaluation metrics}
We use the test set accuracy of each standard dataset to represent the generalization ability. For evaluating adversarial robustness, we adopt a common setting of PGD-10 \cite{madry2018towards} with constraint $\ell_\infty = 8/255$. We run PGD-10 with three times and select the \emph{worst} robust accuracy as the final metric. The OOD robustness is evaluated by the accuracy of the test set of the corrupted dataset corresponding to the standard dataset.

\paragraph{Training details}
We use ResNet18 \cite{he2016deep}, ResNet34 \cite{he2016deep}, WideResNet34-10 (WRN34-10) \cite{wrn} as backbones.
ResNet18 and ResNet34 are 18-layer and 34-layer ResNet models, respectively.
WideResNet34-10 is a 34-layer WideResNet model with a widening factor of 10.
Similarly, we adopt PGD-10 \cite{madry2018towards} with constraint $\ell_\infty = 8/255$ for adversarial training. 
Following standard settings \cite{rice20aoverfitting, pang2021bag}, we train models with adversarial examples for $110$ epochs.
The learning rate starts from $0.1$ and decays by a factor of $0.1$ at epochs $100$ and $105$.
We select the weights with the highest test robust accuracy as our adversarially trained models.

We fine-tune the adversarially trained models $\vtheta_{AT}$ using SGD with momentum \cite{sutskever2013importance} for $10$ epochs.
The initial learning rate is set to $0.001$.\footnote{The best learning rate for fine-tuning vary across architectures and datasets and is required to be carefully modified.}
We decay the learning rate by $1/10$ after fine-tuning for $5$ epochs
We choose the weights with the highest test accuracy as fine-tuned model weights, denoted as $\vtheta_{FT}$.
We then interpolate between initial adversarially trained model weights $\vtheta_{AT}$ and $\vtheta_{FT}$, the best interpolation point selected by Step 3 in Section \ref{sec: rift} is denoted as $\vtheta_{FT}^*$. We then compare the generalization, adversarial robustness, and OOD robustness of $\vtheta_{FT}^*$ and $\vtheta_{AT}$.

We report the average of three different seeds and omit the standard deviations of 3 runs as they are tiny ($< 0.20$\%), which hardly effect the results. 
Refer to Appendix \ref{append-training details} for more training details.

\begin{table*}[t!]
\caption{Results of \method on different datasets and backbones. \textit{Std} means the standard test accuracy for in distribution generalization, \textit{OOD} denotes the OOD robust accuracy of corresponding corruption dataset (\eg, CIFAR10-C). \textit{Adv} denotes the adversarial robust accuracy. In each column, we bold the entry with the higher accuracy. \method improves both generalization and OOD robustness across architectures and datasets while maintaining adversarial robustness.}
\label{table: fine-tuning results}
\centering  
\resizebox{.8\textwidth}{!}{
\begin{tabular}{ c  c  c c c  c c c  c c c}
\toprule

\multirow{2}{*}{Architecture} & \multirow{2}{*}{Method} & \multicolumn{3}{c}{CIFAR10} & \multicolumn{3}{c}{CIFAR100} & \multicolumn{3}{c}{Tiny-ImageNet} \\
\cmidrule(lr){3-5} \cmidrule(lr){6-8} \cmidrule(lr){9-11}
 & & \textit{Std} & \textit{OOD} & \textit{Adv} & \textit{Std} & \textit{OOD} & \textit{Adv} & \textit{Std} & \textit{OOD} & \textit{Adv}\\
\midrule
\multirow{3}{*}{ResNet18}  & AT & 81.46 & 73.56 & 53.63 & 57.10 & 46.43 & 30.15 & 49.10 & 27.68 & 23.28\\
          & AT+RiFT & \textbf{83.44} & \textbf{75.69} & \textbf{53.65} & \textbf{58.74} & \textbf{48.06} & \textbf{30.17} & \textbf{50.61} & \textbf{28.73} & \textbf{23.34} \\
    & $\Delta$ & \color{Red} +1.98 & \color{Red} +2.13 & \color{Red} +0.02 & \color{Red} +1.64 & \color{Red} +1.63 & \color{Red} +0.02 & \color{Red} +1.51 & \color{Red} +1.05 & \color{Red} +0.06\\
\midrule
\multirow{3}{*}{ResNet34}  & AT & 84.23 & 75.37 & 55.31 & 58.67 & 48.24 & 30.50 & 50.96 & 27.91 & 24.27\\
          & AT+RiFT  & \textbf{85.41} & \textbf{77.15} & \textbf{55.34} & \textbf{60.88} & \textbf{49.97} & \textbf{30.58} & \textbf{52.54} & \textbf{30.07} & \textbf{24.37}\\
& $\Delta$ & \color{Red} +1.18 & \color{Red} +1.78 & \color{Red} +0.03  & \color{Red} +2.21 & \color{Red} +1.73  & \color{Red} +0.08 & \color{Red} +1.58 & \color{Red} +2.16 & \color{Red} +0.10  \\
\midrule
\multirow{3}{*}{WRN34-10}    & AT & 87.41 & 78.75 & 55.40 & 62.35 & 50.61 & \textbf{31.66} & 52.78 & 31.81 & 26.07  \\
                & AT+RiFT & \textbf{87.89} & \textbf{79.31} & \textbf{55.41} & \textbf{64.56} & \textbf{52.69} & 31.64 & \textbf{55.31} & \textbf{33.86} & \textbf{26.17}\\
& $\Delta$ & \color{Red} +0.48 & \color{Red} +0.56 & \color{Red} +0.01 & \color{Red} +2.21 & \color{Red} +2.08 & \color{Green} -0.02 & \color{Red} +2.53 & \color{Red} +2.05 & \color{Red} +0.10\\
\midrule
Avg & $\Delta$ & \color{Red} +1.21 & \color{Red} +1.49 & \color{Red} +0.02 & \color{Red} +2.02 & \color{Red} +1.81 & \color{Red} +0.02 & \color{Red} +1.87 & \color{Red} +1.75 & \color{Red} +0.08\\ 
\bottomrule
\end{tabular}
}
\end{table*}

\subsection{Empirical Analysis of MRC}
\label{sec 5.2: mrc analysis}
Before delving into the main results of \method, we first empirically analyze our proposed MRC metric in Definition~\ref{def: mrc}, which serves as the foundation of our \method approach.
We present the MRC analysis on ResNet18 \cite{he2016deep} on CIFAR-10 in \figurename~\ref{fig: module robust criticality}, where each column corresponds to the MRC value and its corresponding robust accuracy drop of a specific module.

Our analysis shows that the impact of worst-case weight perturbations on model robustness varies across different modules. 
Some modules exhibit minimal impact on robustness under perturbation, indicating the presence of \emph{redundant capacity for robustness}. Conversely, for other modules, the worst-case weight perturbation shows a significant impact, resulting in a substantial decline in robustness.
For example, in module {\color{mypink}{\ttfamily layer2.1.conv2}}, worst-case weight perturbations only result in a meager addition of $0.09$ robust loss. However, for {\color{mypurple}{\ttfamily layer4.1.conv1}}, the worst-case weight perturbations affect the model's robust loss by an additional $12.94$, resulting in a substantial decline ($53.03\%$) in robustness accuracy.
Such robust-critical and non-robust-critical modules are verified to exist in various network architectures and datasets, as detailed in Appendix~\ref{append-mrc value plots}.
We also observe that as the network capacity decreases (\eg, from WRN34-10 to ResNet18) and the task becomes more challenging (\eg, from CIFAR10 to Tiny-ImageNet), the proportion of non-robust-critical modules increases, as less complex tasks require less capacity, leading to more non-robust-critical modules.

It is worthy noting that the decrease in robust accuracy doesn't directly correlate with MRC. For instance, both {\ttfamily layer4.0.conv2} and {\ttfamily layer4.1.conv1} have a robust accuracy drop of $53.05\%$, yet their MRC values differ. This discrepancy can be attributed to the different probability distributions of misclassified samples across modules, resulting in same accuracy declines but different losses.

\subsection{Main Results}
\label{sec: 5.3 main results}

\tablename~\ref{table: fine-tuning results} summarizes the main results of our study, from which we have the following findings.

\paragraph{RiFT improves generalization}
First, \method effectively mitigates the trade-off between generalization and robustness raised by adversarial training. Across different datasets and network architectures, \method improves the generalization of adversarially trained models by approximately $2$\%. This result prompts us to rethink the trade-off, as it may be caused by inefficient adversarial training algorithm rather than the inherent limitation of DNNs.
Furthermore, as demonstrated in \figurename~\ref{fig: interpolation}, both adversarial robustness and generalization increase simultaneously in the initial interpolation process, indicating that these two characteristics can be improved together.
This trend is observed across different datasets and network architectures; see Appendix~\ref{append-interpolation} for more illustrations.
This finding challenges the notion that the features of optimal standard and optimal robust classifiers are fundamentally different, as previously claimed by Tsipras \etal \cite{tsipras2018robustness}, as fine-tuning procedures can increase both robustness and generalization.

\paragraph{Fine-tuning improves OOD robustness}
Second, our study also investigated the out-of-distribution (OOD) robustness of the fine-tuned models and observed an improvement of approximately $2$\%.
This observation is noteworthy because recent work \cite{andreassen2021evolution, kumar2022finetuning, wiseft} showed that fine-tuning pre-trained models can distort learned features and result in underperformance in OOD samples.
Furthermore, Yi \etal \cite{yi21improved} demonstrated that adversarial training enhances OOD robustness, but it is unclear whether fine-tuning on adversarially trained models distorts robust features.
Our results indicate that fine-tuning adversarially trained models does not distort the robust features learned by adversarial training and instead helps improve OOD robustness. We suggest fine-tuning adversarially trained models may be a promising avenue for further improving OOD robustness.

\subsection{Incorporate \method to Other AT Methods}

\begin{table*}[t!]
\parbox{.57\linewidth}
{
\caption{Results of \method + other AT methods.}
\label{table: combine other tricks}
\centering
\resizebox{.57\textwidth}{!}{
\begin{tabular}{ c c c c c c c c c c}

\toprule

\multirow{2}{*}{Method}  & \multicolumn{3}{c}{CIFAR10} & \multicolumn{3}{c}{CIFAR100}  \\
\cmidrule(lr){2-4}  \cmidrule(lr){5-7} 
& \textit{Std} & \textit{OOD} & \textit{Adv} & \textit{Std} & \textit{OOD} & \textit{Adv} \\

\midrule
TRADES & 81.54 & 73.42 & \textbf{53.31} & 57.44 & 47.23 & 30.20 \\
TRADES+RiFT & \textbf{81.87} & \textbf{74.09} & 53.30 & \textbf{57.78} & \textbf{47.52} & \textbf{30.22} \\
$\Delta$ & \color{Red} +0.33 & \color{Red} +0.67 & \color{Green} -0.01 & \color{Red} +0.34 & \color{Red} +0.29 & \color{Red} +0.02 \\

\midrule
MART & 76.77 & 68.62 & 56.90 & 51.46 & 42.07 & 31.47 \\
MART+RiFT & \textbf{77.14} & \textbf{69.41} & \textbf{56.92} & \textbf{52.42} & \textbf{43.35} & \textbf{31.48} \\
$\Delta$ & \color{Red} +0.37 & \color{Red} +0.79 & \color{Red} +0.02 & \color{Red} +0.96 & \color{Red} +1.28 & \color{Red} +0.01 \\

\midrule
AWP & 78.40 & 70.48 & 53.83 & 52.85 & 43.10 & 31.00 \\
AWP+RiFT & \textbf{78.79} & \textbf{71.12} & \textbf{53.84} & \textbf{54.89} & \textbf{45.08} & \textbf{31.05} \\
$\Delta$ &\color{Red} + 0.39 & \color{Red} +0.64 & \color{Red} +0.01  & \color{Red} +2.04 & \color{Red} +1.98 & \color{Red} +0.05 \\

\midrule
SCORE & 84.20 & 75.82 & 54.59 & 54.83 & 45.39 & 29.49 \\
SCORE+RiFT & \textbf{85.65} & \textbf{77.37} & \textbf{54.62} & \textbf{57.63} & \textbf{47.77} & \textbf{29.50} \\
$\Delta$ & \color{Red} +1.45  & \color{Red} +1.55  & \color{Red} +0.03  & \color{Red} +2.80 & \color{Red} +2.38 & \color{Red} +0.01 \\
\bottomrule
\end{tabular}
}
}
\hfill
{
\parbox{.4\linewidth}{

\caption{Results of fine-tuning on different modules.}
\label{table: fine-tuning on different modules}
\centering
\resizebox{.4\textwidth}{!}{
\begin{tabular}{ c c c c}

\toprule

Method & \textit{Std} & \textit{OOD} & \textit{Adv} \\

\midrule
All layers & 83.56 & 75.48 & 52.66 \\
Last layer & 83.35 & 75.16 & 52.75 \\
Robust-critical & 83.36 & 75.42 & 52.48 \\
Non-robust-critical & 83.44 & 75.69 & \textbf{53.65} \\
\bottomrule
\end{tabular}
}

\vfill
\vspace{30pt}

\caption{Results of fine-tuning on multiple non-robust-critical modules.}
\label{table: fine-tuning on multiple modules}
\centering
\begin{tabular}{ c c c c}

\toprule
 
Method & \textit{Std} & \textit{OOD} & \textit{Adv} \\
\midrule
Top 1 & 83.44 & 75.69 & \textbf{53.65} \\
Top 2 & 83.41 & 75.61 & 52.47 \\
Top 3 & 83.59 & 75.77 & 52.22 \\
Top 5 & 83.70 & 75.82 & 52.35 \\
\bottomrule
\end{tabular}
}
}
\end{table*}

To further validate the effectiveness of \method, we conduct experiments on ResNet18 \cite{he2016deep} trained on CIFAR10 and CIFAR100 \cite{cifar} using four different adversarial training techniques: TRADES \cite{trades}, MART \cite{mart}, AWP \cite{awp}, and SCORE \cite{score}, and then apply our \method to the resulting models. As shown in \tablename~\ref{table: combine other tricks}, our approach is compatible with various adversarial training methods and improves generalization and OOD robustness.

\subsection{Ablation Study}
\label{sec: 5.5 ablation study}
\paragraph{Fine-tuning on different modules}

To evaluate the efficacy of fine-tuning the non-robust-critical module, we conducted further experiments by fine-tuning the adversarially trained model on different modules. Specifically, we used four fine-tuning methods: fully fine-tuning, linear probing (fine-tuning on the last layer), fine-tuning on the non-robust-critical module, and fine-tuning on the robust-critical module. The experiment was conducted using ResNet18 on CIFAR-10, and the results are presented in \figurename~\ref{fig: interpolation} and \tablename~\ref{table: fine-tuning on different modules}. As described in Section \ref{section 3.1}, MRC is an upper bound for weight perturbation, indicating the criticality of a module in terms of model robustness. Fine-tuning on a non-robust-critical module can help preserve adversarial robustness but does not guarantee improvement in generalization.
Similarly, fine-tuning on the robust-critical module does not necessarily hurt robustness. However, our experiments observed that all fine-tuning methods improved generalization ability, but only fine-tuning on non-robust-critical module preserved adversarial robustness. Moreover, fine-tuning on the robust-critical module exhibited the worst trade-off between generalization and robustness compared to fine-tuning on all layers.

\paragraph{More non-robust-critical modules, more useful?}

To investigate whether fine-tuning on more non-critical modules could further improve generalization, we additionally fine-tune on the top two, top three, and top five non-robust-critical modules. However, \tablename~\ref{table: fine-tuning on different modules} reveals that generalization and OOD robustness did not surpass the results achieved by fine-tuning a singular non-robust-critical module. Notably, performance deteriorated when fine-tuning multiple non-critical modules compared to fine-tuning all layers. It's pivotal to note that this doesn't negate MRC's applicability to several modules. The MRC for module $i$ is evaluated with other module parameters held constant, making it challenging to discern the impact of worst-case perturbations across multiple modules using the MRC of a single one. We posit that broadening MRC's definition to encompass multiple modules might address this problem.

\paragraph{Ablation on interpolation factor $\alpha^*$}
The value of $\alpha^*$ is closely related to the fine-tuning learning rate. Specifically, a large learning rate can result in substantial weight updates that may push the fine-tuned weights $\vtheta_{FT}$ away from their adversarially trained counterparts $\vtheta_{AT}$. Our empirical results indicate that a fine-tuning learning rate of $0.001$ is suitable for most cases and that the corresponding $\alpha^*$ value generally ranges between $0.6$ to $0.9$.

\paragraph{Factors related to the generalization gain of \method}

"Our results unveiled patterns and behaviors that offer insights into the determinants of the generalization gains observed with RiFT. First, the generalization gain of \method is a function of both the neural network's inherent capacity and the inherent difficulty posed by the classification task. Specifically, as the classification task becomes more challenging, the robust criticality of each module increases, which in turn decreases the generalization gain of RiFT. This effect can be mitigated by using a model with a larger capacity. For instance, we observe that the generalization gain of \method increases as we switch from ResNet18 to ResNet34 and to WRN34-10 when evaluating on CIFAR100 and Tiny-ImageNet. Further, We observed that the generalization gain of RiFT with WRN34-10 on CIFAR10 is notably lower, at approximately $0.5\%$, compared to $2\%$ on other datasets. This might be attributed to the minimal generalization disparity between adversarially trained models and their standard-trained counterparts; specifically, while WRN34-10's standard test accuracy stands at around $95\%$, its adversarial counterpart registers at $87\%$. It is evident that fine-tuning on a single module may not yield significant improvements. Investigating these patterns further could offer strategies for enhancing the robustness and generalization capabilities of deep neural networks.

\section{Conclusion}
In this paper, we aim to exploit the redundant capacity of the adversarially trained models.
Our proposed \method leverages the concept of module robust criticality (MRC) to guide the fine-tuning process, which leads to improved generalization and OOD robustness. The extensive experiments demonstrate the effectiveness of RiFT across various network architectures and datasets. Our findings shed light on the intricate relationship between generalization, adversarial robustness, and OOD robustness. RiFT is a primary exploration of fine-tuning the adversarially trained models. We believe that fine-tuning holds great promise, and we call for more theoretical and empirical analyses to advance our understanding of this important technique.


{\small
\bibliographystyle{ieee_fullname}
\bibliography{egbib}
}

\clearpage
\newpage
\appendix

\renewcommand\thefigure{\thesection.\arabic{figure}}    
\setcounter{figure}{0} 

\section{Proof of the scale-invariant property}
\label{append-proof}

Without loss of generality, assume a two layers neural network $f$ and $\phi$ is a ReLU-based activation function. 
\begin{equation}
    f(\vtheta_f, \vx) = \vtheta^{(2)} \phi (\vtheta^{(1)}  \vx).
\end{equation}

The corresponding scaled neural network $g$ is:
\begin{equation}
    g(\vtheta_g, \vx) = \frac{1}{\beta} \vtheta^{(2)} \phi ( \beta \vtheta^{(1)} \vx),
\end{equation}
where the non-negative $\beta$ is the scaling factor. 

Suppose we calculate the MRC value of the first module $\vtheta^{(1)}$ and $\frac{1}{\beta} \vtheta^{(1)}$.

\begin{theorem}
\label{theorem: homo}
The rectified function $\phi(x) = \max (x, 0)$ is a homogeneous function where
\begin{align}
    \forall (z, \beta) \in \mathbb{R} \times \mathbb{R^+}, \, \phi(\beta z) = \beta \phi (z).
\end{align}
\end{theorem}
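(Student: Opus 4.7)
The plan is to prove Theorem~\ref{theorem: homo} by a direct two-case analysis on the sign of $z$, leveraging the fact that multiplication by a strictly positive scalar $\beta$ preserves the sign of any real number. Since $\phi$ is piecewise linear with the two pieces meeting at the origin, and both pieces are linear with slopes $0$ and $1$, positive homogeneity follows from the sign-preservation of $\beta z$ under each branch of the definition of $\phi$.

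First, I would split into the cases $z \geq 0$ and $z < 0$. In the first case, since $\beta > 0$, we have $\beta z \geq 0$, so by definition $\phi(\beta z) = \beta z$, which equals $\beta \phi(z) = \beta z$. In the second case, $\beta z < 0$, hence $\phi(\beta z) = 0$, and also $\phi(z) = 0$, so $\beta \phi(z) = 0$ matches. Combining the two cases yields the desired equality $\phi(\beta z) = \beta \phi(z)$ for all $(z, \beta) \in \mathbb{R} \times \mathbb{R}^+$.

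There is no real obstacle here; the only subtlety worth flagging is the strict positivity of $\beta$, which is essential: if $\beta = 0$ the identity is trivial, while if $\beta < 0$ the argument fails because multiplication by $\beta$ flips the sign, breaking the case alignment (indeed, $\phi(\beta z) = -\beta \phi(-z)$ for $\beta < 0$, not $\beta \phi(z)$). Given the hypothesis $\beta \in \mathbb{R}^+$, this is not an issue, but it is the only place where any assumption is actually used. The entire proof should fit in a few lines and serves only as the scalar building block for the subsequent scale-invariance argument about the two-layer network $f$ versus its rescaled counterpart $g$.
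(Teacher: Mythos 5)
Your proof is correct and follows essentially the same approach as the paper: the paper asserts $\max(\beta z, 0) = \beta \max(z, 0)$ directly, while you justify that single step by the two-case sign analysis, which is the natural way to verify it. The remark about why $\beta > 0$ is the only load-bearing assumption is accurate and a nice touch, though not in the paper's version.
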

\begin{proof}
\begin{align}
    \phi(\beta z) = \max (\beta z, 0) = \beta \max (z, 0) = \beta \phi (z).
\end{align}
\end{proof}

\begin{theorem}
\label{theorem: equality}
    $\forall \vx, f(\vtheta_f, \vx) \equiv g(\vtheta_g, \vx)$.
\end{theorem}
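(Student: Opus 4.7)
The plan is to apply the homogeneity property established in Theorem \ref{theorem: homo} to push the scaling factor $\beta$ through the activation function, and then observe that the two factors of $\beta$ and $\tfrac{1}{\beta}$ cancel. Since $\beta > 0$, this is a direct consequence of the positive homogeneity of the ReLU-based activation.

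Concretely, I would start from the definition of $g(\vtheta_g, \vx) = \tfrac{1}{\beta} \vtheta^{(2)} \phi(\beta \vtheta^{(1)} \vx)$. Viewing $\beta \vtheta^{(1)} \vx$ as $\beta z$ with $z = \vtheta^{(1)} \vx$ and applying Theorem \ref{theorem: homo} componentwise to the activation, we obtain $\phi(\beta \vtheta^{(1)} \vx) = \beta \, \phi(\vtheta^{(1)} \vx)$. Substituting this back gives $g(\vtheta_g, \vx) = \tfrac{1}{\beta} \vtheta^{(2)} \cdot \beta \phi(\vtheta^{(1)} \vx) = \vtheta^{(2)} \phi(\vtheta^{(1)} \vx) = f(\vtheta_f, \vx)$, as desired.

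There is essentially no obstacle here: the only subtlety worth a brief remark is that homogeneity is applied entrywise to the pre-activation vector $\vtheta^{(1)} \vx$, which is justified because $\phi$ acts coordinatewise and Theorem \ref{theorem: homo} gives $\phi(\beta z_j) = \beta \phi(z_j)$ for each coordinate $z_j$ (using $\beta \geq 0$). The scalar $\beta > 0$ commutes with the linear map $\vtheta^{(2)}$, so pulling it out of the product and cancelling with $\tfrac{1}{\beta}$ is immediate. This establishes the equivalence $f(\vtheta_f, \vx) \equiv g(\vtheta_g, \vx)$ for every input $\vx$, which is the scale-invariance property needed to justify the relative weight-perturbation constraint $\mathcal{C}_{\vtheta}$ used in the definition of MRC.
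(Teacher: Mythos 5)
Your proof is correct and follows exactly the same route as the paper: apply the positive homogeneity of $\phi$ (Theorem~\ref{theorem: homo}) to pull $\beta$ through the activation, then cancel it against the $\tfrac{1}{\beta}$ factor. Your explicit remark that homogeneity is applied coordinatewise is a small clarification the paper leaves implicit, but the argument is otherwise identical.
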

\begin{proof}
\begin{align}
    g(\vtheta_g, \vx) &= \frac{1}{\beta} \vtheta^{(2)} \phi ( \beta \vtheta^{(1)} \vx) \\
    &\equiv  \frac{1}{\beta} \beta \vtheta^{(2)} \phi ( \vtheta^{(1)} \vx) \\
    &\equiv \vtheta^{(2)} \phi ( \vtheta^{(1)} \vx) \\
    &\equiv f(\vtheta_f, \vx)
\end{align}
\end{proof}





\begin{theorem} 
\label{theorem: robust loss euqality}
The robust losses of $f$ and $g$ are equal:
\begin{align}
    \mathcal{R}(f(\vtheta_f), \mathcal{D}) \equiv \mathcal{R}(g(\vtheta_g), \mathcal{D}).
\end{align}
\end{theorem}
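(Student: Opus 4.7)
The plan is to derive this theorem as a direct corollary of Theorem~\ref{theorem: equality}, which already establishes the pointwise equivalence $f(\vtheta_f, \vx) \equiv g(\vtheta_g, \vx)$ for every input $\vx$. Since the robust loss $\mathcal{R}(\cdot, \mathcal{D})$ is defined purely as a functional of the network's input--output map (via the cross-entropy $\mathcal{L}$, a maximization over $\Delta \vx \in \mathcal{S}$, and a sum over $\mathcal{D}$), two networks that agree as functions must produce identical robust losses on any dataset.

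Concretely, I would proceed in three short steps. First, I would note that because the equivalence in Theorem~\ref{theorem: equality} holds for \emph{every} $\vx$, it holds in particular for each perturbed point $\vx + \Delta \vx$ with $(\vx, y) \in \mathcal{D}$ and $\Delta \vx \in \mathcal{S}$. Hence
\begin{equation*}
\mathcal{L}\bigl(f(\vtheta_f, \vx + \Delta \vx), y\bigr) = \mathcal{L}\bigl(g(\vtheta_g, \vx + \Delta \vx), y\bigr).
\end{equation*}
Second, since the two expressions are equal as functions of $\Delta \vx$ over the common feasible set $\mathcal{S}$, their suprema over $\mathcal{S}$ coincide:
\begin{equation*}
\max_{\Delta \vx \in \mathcal{S}} \mathcal{L}\bigl(f(\vtheta_f, \vx + \Delta \vx), y\bigr) = \max_{\Delta \vx \in \mathcal{S}} \mathcal{L}\bigl(g(\vtheta_g, \vx + \Delta \vx), y\bigr).
\end{equation*}
Third, summing this equality over all $(\vx, y) \in \mathcal{D}$ and invoking the definition of $\mathcal{R}$ from \eq{eq-at} gives the claim.

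There is no substantive obstacle here; the content of the theorem has already been discharged by Theorems~\ref{theorem: homo} and~\ref{theorem: equality}, which handle the nontrivial positive-homogeneity argument for the ReLU. The only minor point worth stating explicitly in the write-up is that the maximization domain $\mathcal{S}$ is the same for $f$ and $g$ (it depends on the input space, not on the parametrization), so the outer maxima are taken over identical sets and the passage from pointwise equality to equality of suprema is immediate.
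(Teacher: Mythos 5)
Your proposal is correct and follows essentially the same three-step argument as the paper: pointwise equality from Theorem~\ref{theorem: equality} applied at $\vx + \Delta\vx$, equality of maxima over the common feasible set $\mathcal{S}$, and then summation over $\mathcal{D}$. Your explicit remark that $\mathcal{S}$ is parameter-independent is a small clarifying addition the paper leaves implicit, but the proof is otherwise the same.
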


\begin{proof}
According to Theorem \ref{theorem: equality},
\begin{align}
\forall \vx + \Delta \vx, f(\vtheta_f, \vx + \Delta \vx) \equiv g(\vtheta_g, \vx + \Delta \vx).
\end{align}

Thus,
\begin{align}
     & \mathop{ \max}_{\Delta \vx \in \mathcal{S}} \ell (f ( \vtheta_f, \vx + \Delta \vx), y) \\ 
    \equiv & \mathop{ \max}_{\Delta \vx \in \mathcal{S}} \ell (g ( \vtheta_g, \vx + \Delta \vx), y).
\end{align}

Thus,
\begin{align}
\mathcal{R}(f(\vtheta_f), \mathcal{D}) &= \sum \limits_{(\vx, y) \in \mathcal{D}} \max_{\Delta \vx \in \mathcal{S}} \ell (f ( \vtheta_f, \vx + \Delta \vx), y) \\
&\equiv \sum \limits_{(\vx, y) \in \mathcal{D}} \max_{\Delta \vx \in \mathcal{S}} \ell (g ( \vtheta_g, \vx + \Delta \vx), y) \\
&= \mathcal{R}(g(\vtheta_g), \mathcal{D})
\end{align}

\end{proof}

\begin{theorem}
The Module Robustness Criticality (MRC) proposed in Definition \ref{def: mrc} is invariant to the scaling of the parameters.
\end{theorem}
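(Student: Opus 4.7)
The plan is to leverage the bijection between the two constraint sets induced by the scaling, and then reuse the network-equality and robust-loss-equality theorems already proved earlier in the appendix. Writing $\vtheta_f^{(1)} = \vtheta^{(1)}$ and $\vtheta_g^{(1)} = \beta \vtheta^{(1)}$ (with $\beta>0$ so that division by $\beta$ in $g$ is well defined), the constraint sets for perturbing the first layer of $f$ and $g$ are
\begin{align*}
\mathcal{C}_f &= \{\Delta : \|\Delta\|_p \leq \epsilon\|\vtheta^{(1)}\|_p\}, \\
\mathcal{C}_g &= \{\Delta' : \|\Delta'\|_p \leq \epsilon\beta\|\vtheta^{(1)}\|_p\},
\end{align*}
since $\|\beta\vtheta^{(1)}\|_p = \beta\|\vtheta^{(1)}\|_p$. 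The key observation is that the map $\Delta \mapsto \beta\Delta$ is a bijection from $\mathcal{C}_f$ onto $\mathcal{C}_g$.

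Next, for any $\Delta \in \mathcal{C}_f$ with image $\Delta' = \beta\Delta \in \mathcal{C}_g$, the perturbed first-layer weight of $g$ becomes $\vtheta_g^{(1)} + \Delta' = \beta(\vtheta^{(1)} + \Delta)$. Applying Theorem~\ref{theorem: equality} with $\vtheta^{(1)} + \Delta$ in the role of the first-layer weight and the same scaling factor $\beta$, I would conclude that $g(\vtheta_g + \Delta', \vx) \equiv f(\vtheta_f + \Delta, \vx)$ for every input $\vx$. Theorem~\ref{theorem: robust loss euqality} then yields
\begin{equation*}
\mathcal{R}(g(\vtheta_g + \Delta'), \mathcal{D}) = \mathcal{R}(f(\vtheta_f + \Delta), \mathcal{D}).
\end{equation*}

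Finally, taking the maximum of both sides over $\Delta \in \mathcal{C}_f$ and using the bijection to identify it with the maximum over $\Delta' \in \mathcal{C}_g$, and subtracting the $\Delta = 0$ identity $\mathcal{R}(g(\vtheta_g), \mathcal{D}) = \mathcal{R}(f(\vtheta_f), \mathcal{D})$, produces $\mathit{MRC}(g, \vtheta_g^{(1)}, \mathcal{D}, \epsilon) = \mathit{MRC}(f, \vtheta_f^{(1)}, \mathcal{D}, \epsilon)$. The only subtle point, which I would flag as the heart of the argument, is checking that the allowed perturbation radius rescales by exactly the same factor $\beta$ that relates the two layers' weights; this is precisely why the definition of $\mathcal{C}_\vtheta$ uses the \emph{relative} bound $\epsilon\|\vtheta^{(i)}\|_p$ rather than an absolute bound, and the whole invariance would collapse under an absolute or otherwise mismatched constraint.
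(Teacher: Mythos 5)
Your proof is correct and rests on the same ingredients as the paper's: the constraint set rescales by exactly $\beta$ because it is defined relative to $\lVert\vtheta^{(i)}\rVert_p$, the perturbed-weight network is pointwise unchanged (Theorem~\ref{theorem: equality}), and the robust loss is therefore preserved (Theorem~\ref{theorem: robust loss euqality}). The one stylistic difference is how you conclude that the maxima agree: the paper runs a two-sided sandwich argument, introducing the argmax perturbations $\Delta\vtheta_f^*$ and $\Delta\vtheta_g^*$ and proving $\max_{\mathcal{C}_f}\leq\max_{\mathcal{C}_g}$ and then the reverse by symmetry, whereas you observe directly that $\Delta\mapsto\beta\Delta$ is a loss-preserving bijection $\mathcal{C}_f\to\mathcal{C}_g$, so the two suprema coincide in one step. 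The bijection packaging is a bit cleaner, and you also make explicit the final subtraction of the $\Delta=0$ baselines (which the paper leaves implicit, relying on Theorem~\ref{theorem: robust loss euqality}); but this is a presentation difference, not a genuinely different route.
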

\begin{proof}

Let $\Delta \vtheta_f = \{ \Delta \vtheta_f^{(1)}, \mathbf{0} \}, \Delta \vtheta_g = \{ \Delta \vtheta_g^{(1)}, \mathbf{0} \}$ be the perturbation of the first layer for network $f$ and $g$ respectively. First, we prove 
\begin{align}
&\max_{\Delta \vtheta_f \in \mathcal{C}_{\vtheta_f}} \mathcal{R}(f(\vtheta_f + \Delta \vtheta_f), \mathcal{D}) \\ \leq &\max_{\Delta \vtheta_g \in \mathcal{C}_{\vtheta_g}} \mathcal{R}( g(\vtheta_g + \Delta \vtheta_g), \mathcal{D}).
\end{align}

Let
\begin{align}
    \Delta \vtheta_f^* = \mathop{\arg \max} \limits_{\Delta \vtheta_f \in \mathcal{C}_{\theta_f}} \mathcal{R}(f(\vtheta_f + \Delta \vtheta_f), \mathcal{D}), \\
    \Delta \vtheta_g^* = \mathop{\arg \max} \limits_{\Delta \vtheta_g \in \mathcal{C}_{\theta_g}} \mathcal{R}(g(\vtheta_g + \Delta \vtheta_g), \mathcal{D}).
\end{align}

Consider the perturbation $\Delta \Tilde{ \vtheta_g } = \beta \Delta \vtheta_f^*$ for $g$, it is easy to show that $\Delta \Tilde{ \vtheta_g }\in \mathcal{C}_{\theta_g}$,
\begin{align}
    \mathcal{C}_{\vtheta_f} &= \{ \Delta \vtheta_f \, \big|  \, \lVert \Delta \vtheta_f \rVert_p \leq \epsilon \lVert \vtheta_f^{(1)} \rVert_p \},\\
    \mathcal{C}_{\vtheta_g} &= \{ \Delta \vtheta_g \, \big|  \,  \lVert \Delta \vtheta_g \rVert_p \leq \epsilon \lVert \vtheta_g^{(1)} \rVert_p \} \\
    &=\{ \Delta \vtheta_g \, \big|  \, \Delta \vtheta_g = \beta \lVert \Delta \vtheta_f \rVert_p \leq \epsilon \beta\lVert \vtheta_f^{(1)} \rVert_p \}.
\end{align}

Therefore,
{\small
\begin{align}
     \mathcal{R}(g(\vtheta_g + \Delta \Tilde{ \vtheta_g }), \mathcal{D}) &\leq \mathop{ \max} \limits_{\Delta \vtheta_g \in \mathcal{C}_{\theta_g}} \mathcal{R}(g(\vtheta_g + \Delta \vtheta_g), \mathcal{D})\\
     &= \mathcal{R}(g(\vtheta_g + \Delta \vtheta_g^*), \mathcal{D}).
\end{align}
}

Repeat the same analysis as presented in Theorem \ref{theorem: equality},
\begin{align}
    &g(\vtheta_g + \Delta \Tilde{ \vtheta_g }) \\
    = &\frac{1}{\beta} \vtheta^{(2)} \phi ( (\beta \vtheta^{(1)} + \beta \Delta \vtheta_f^*) \vx) \\
    = &\vtheta^{(2)} \phi ( ( \vtheta^{(1)} + \Delta \vtheta_f^*) \vx) \\
    \equiv &f(\vtheta_f + \Delta \vtheta_f^*).
\end{align}

According to Theorem \ref{theorem: robust loss euqality},
\begin{align}
     \mathcal{R}(f(\vtheta_f + \Delta \vtheta_f^*), \mathcal{D})  &\equiv \mathcal{R}(g(\vtheta_g + \Delta \Tilde{ \vtheta_g }), \mathcal{D}) \\
     &\leq \mathcal{R}(g(\vtheta_g + \Delta \vtheta_g^*), \mathcal{D}).
\end{align}

Similarly, we can prove 
\begin{align}
& \max_{\Delta \vtheta_g \in \mathcal{C}_{\vtheta_g}} \mathcal{R}( g(\vtheta_g + \Delta \vtheta_g), \mathcal{D}) \\ \leq & \max_{\Delta \vtheta_f \in \mathcal{C}_{\vtheta_f}} \mathcal{R}(f(\vtheta_f + \Delta \vtheta_f), \mathcal{D}). 
\end{align}

Thus, 
\begin{align}
&\max_{\Delta \vtheta_f \in \mathcal{C}_{\theta_f}} \mathcal{R}(f(\theta_f + \Delta \vtheta_f), \mathcal{D}) \\
= &\max_{\Delta \vtheta_g \in C_{\vtheta_g}} \mathcal{R} (g(\theta_g + \Delta \vtheta_g), \mathcal{D}).
\end{align}

Such that the proof ends.
\end{proof}

\section{Algorithm of \method}
\label{append-algo}

The complete algorithm of \method is presented in Algorithm \ref{alg: rift}.

\begin{algorithm}[htbp]
\caption{Robust Critical Fine-Tuning}
\label{alg: rift}
\begin{algorithmic}[1]
\Require adversarially trained model weights $\vtheta_{AT}$, standard dataset $\mathcal{D}_{std}$, weight perturbation scaling factor $\alpha$, fine-tuning optimization iteration steps $T$ and learning rate $\gamma$, weight decay facotr $\lambda$.
\Ensure The fine-tuned model weights $\vtheta_{AT}^*$.
\State{\textbf{Step 1}: Calculate MRC for each module}

\For{Module weight $\vtheta^{(j)}$}
\State{Calculate MRC value of $\vtheta^{(j)}$ using Algorithm \ref{alg: mrc calculation}.}
\EndFor

\State{Select the module with lowest MRC value, denote as non-robust critical module $\vtheta^{(i)}$}

\State{\textbf{Step 2}: Fine-tuning on Non-robust critical module}
\State{$\vtheta_{1} = \vtheta_{AT}$}
\For{$t = 1, \hdots, T$} \Comment{Fine-tuning $T$ epochs}
\For{Batch $\mathcal{B}_k \in \mathcal{D}_{std}$}
\State{Calculate loss: $\mathcal{L}(f(\vtheta_t), \mathcal{B}_k))$}
\State{$\vtheta^{(i)}_{t+1} = \vtheta^{(i)}_{t+1} - \gamma \nabla_{\vtheta_t}(\mathcal{L})$} 
\Comment{Gradient Descent}
\EndFor
\State{$\vtheta_{FT} = \vtheta_t$ if $\vtheta_t$ obtain highest std test acc.}
\EndFor

\State{\textbf{Step 3}: Interpolation}
\For{$\alpha \in (0, 1, 0.05)$}
\State{$\vtheta_\alpha = (1-\alpha) \vtheta_{AT} + \alpha \vtheta_{FT}$}
\State{$\vtheta_{FT}^* = \vtheta_\alpha$ if it reaches best standard test acc while preserve the robustness as $\vtheta_{AT}$.}
\EndFor

\State{\textbf{Return} Fine-tuned model weights $\vtheta_{FT}^*$}
\end{algorithmic}
\end{algorithm}

\section{Training Details}
\label{append-training details}

\subsection{Experiment Environment}
All experiments are conducted on a workstation equipped with an NVIDIA GeForce RTX 3090 GPU with 24GB memory and NVIDIA A100 with 80GB memory. The PyTorch version is 1.11.0.

\subsection{Adversarial Training Details}
For vanilla adversarial training, We set the initial learning rate as 0.1, which decays at 100 and 105 epochs with factor 10. When generating adversarial examples, we set BN as train mode since it usually achieves higher robustness.

When incorporating RiFT with other adversarial training methods, the SCORE method is incorporated with TRADES. For the CIFAR100 training, we ran with three different learning rate and select the best model weights as the one with highest robust accuracy. The hyper-parameter settings are either based on their original paper or same as the vanilla AT, depends on which method achieves better robust accuracy.

\subsection{Fine-tuning Details}
The hyper-parameter that most affects fine-tuning is the initial learning rate. According to our experience, we find a small learning rate usually performs better. If the adversarial robustness of the final fine-tuned weights is still higher than the robustness of the initial adversarial training, we then increase the learning rate.

\subsection{The MRC value of ResNet34 and WRN34-10}
\label{append-mrc value plots}
\figurename~\ref{append fig: R34 module robust criticality} and \figurename~\ref{append fig: WRN34 module robust criticality} shows the Module Robust Criticality (MRC) value of each module in ResNet34 trained on CIFAR100 and WideResNet34 trained on Tiny-ImageNet, respectively. It can be observed that both models exhibit redundant capacity. Additionally, \figurename~\ref{append fig: R18 cifar100 module robust criticality} and \figurename~\ref{append fig: R18 tiny module robust criticality} shows the MRC value of each module in ResNet18 trained on CIFAR100 and Tiny-ImageNet, respectively. As we discussed in Section~\ref{sec: 5.3 main results} and Section~\ref{sec: 5.5 ablation study}, ResNet18 has a lower redundant capacity compared to ResNet34 and WideResNet34, and the redundant capacity decreases as the classification task becomes more complex.

\begin{figure*}[ht]
\centering
\includegraphics[width=\textwidth]{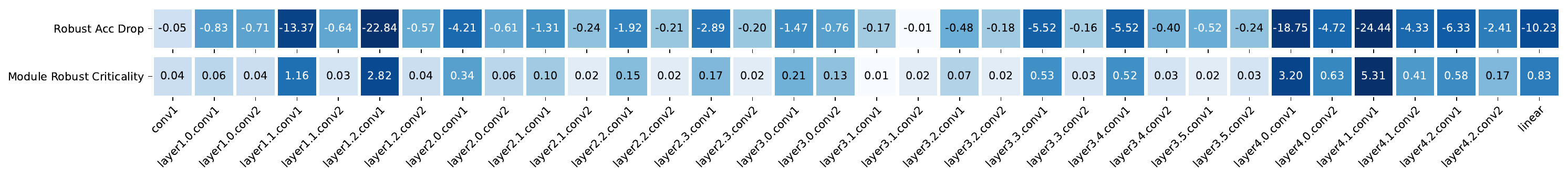}
\caption{Example of module robust criticality (MRC) and its corresponding robust accuracy drop of ResNet34 trained on CIFAR100.}
\label{append fig: R34 module robust criticality}
\end{figure*}

\begin{figure*}[ht]
\centering
\includegraphics[width=\textwidth]{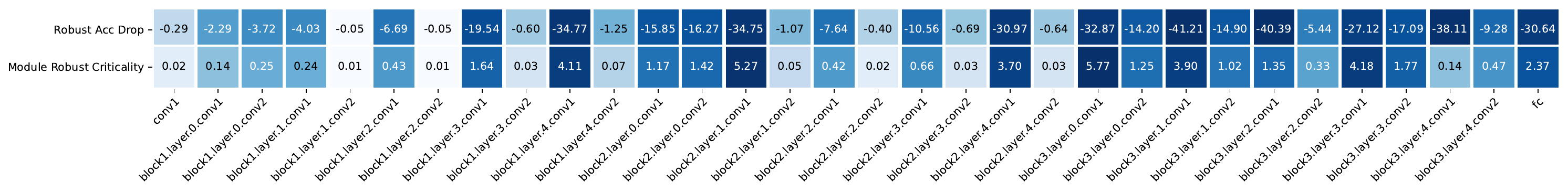}
\caption{Example of module robust criticality (MRC) and its corresponding robust accuracy drop of WideResNet34 trained on Tiny-ImageNet.}
\label{append fig: WRN34 module robust criticality}
\end{figure*}

\begin{figure*}[ht]
\centering
\includegraphics[width=\textwidth]{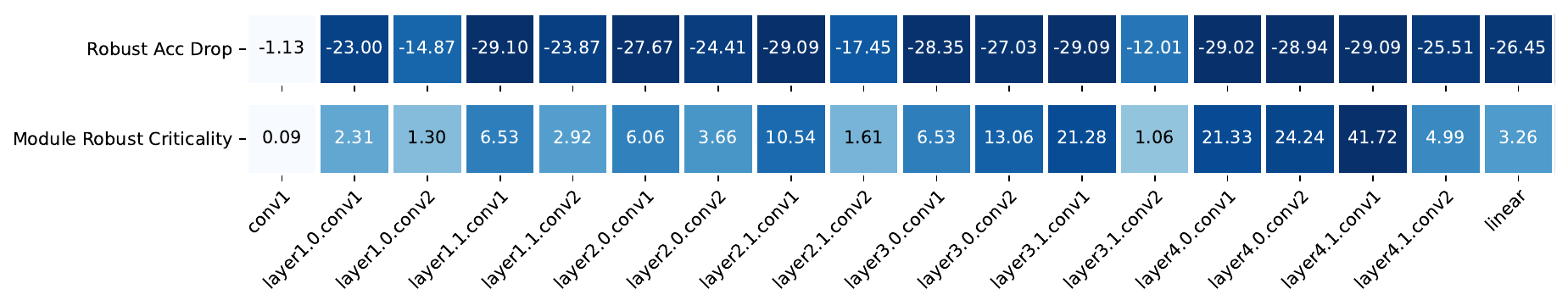}
\caption{Example of module robust criticality (MRC) and its corresponding robust accuracy drop of ResNet18 trained on CIFAR100.}
\label{append fig: R18 cifar100 module robust criticality}
\end{figure*}

\begin{figure*}[ht]
\centering
\includegraphics[width=\textwidth]{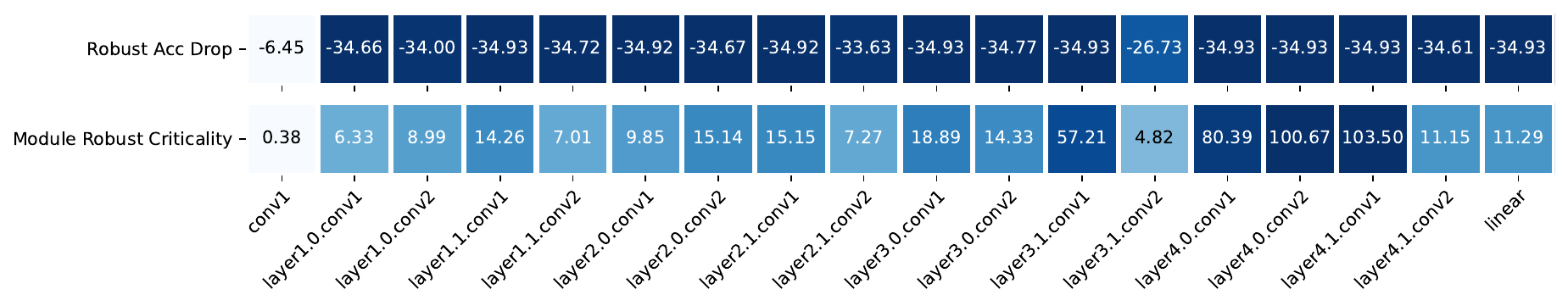}
\caption{Example of module robust criticality (MRC) and its corresponding robust accuracy drop of ResNet18 trained on Tiny-ImageNet.}
\label{append fig: R18 tiny module robust criticality}
\end{figure*}

\subsection{More interpolation results}
\label{append-interpolation}

\figurename~\ref{append fig: extra interpolation} shows the interpolation results of different modules of ResNet18 trained on CIFAR100 dataset. It can be observed that fine-tuning on robust-critical module can also help improve generalization and robustness. This does not mean that our MRC is wrong, as we claimed in Section~\ref{sec 5.2: mrc analysis}, fine-tuning on robust-critical module does not necessarily hurt robustness. The MRC provides guidance on which module to fine-tune for optimal results, and still, fine-tuning on non-robust-critical module achieves the highest test accuracy while preserving robustness.

\begin{figure}[ht]
\centering
\includegraphics[width=0.5\textwidth]{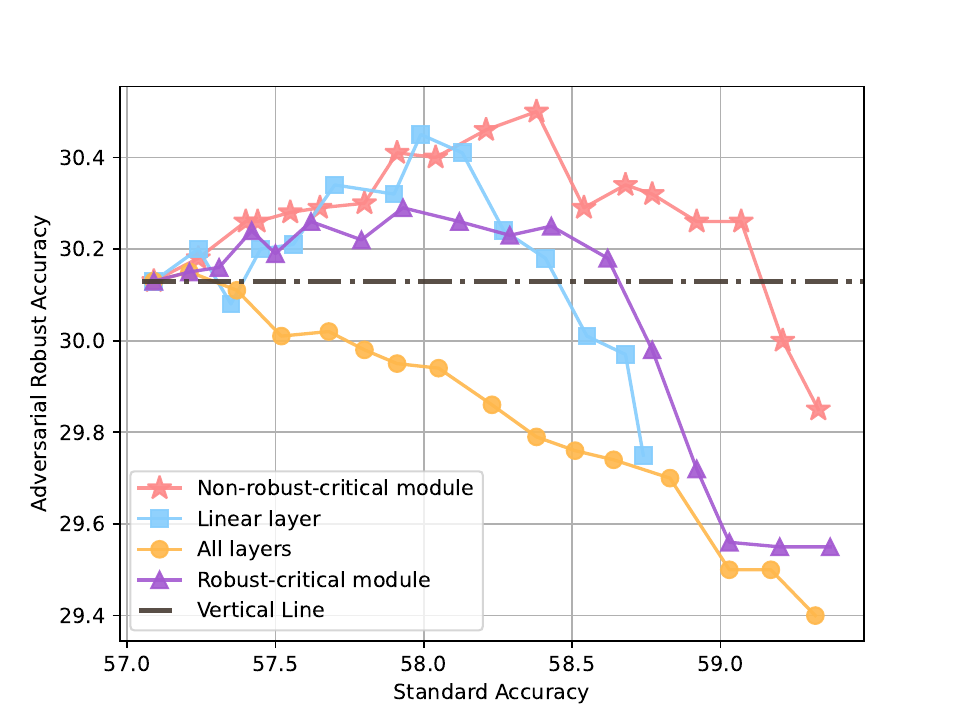}
\caption{Interpolation results of fine-tuning on different modules of ResNet18 on CIFAR100 dataset. 
Dots denote different interpolation points between the final fine-tuned weights of \method and the initial adversarially trained weights.}
\label{append fig: extra interpolation}
\end{figure}

\section{Analysis of the complexity of MRC algorithm}
When identifying the most non-robust-critical module, it is required to iterate all modules of the model. Suppose a model with $n$ modules, for each module, the calculation complexity depends on the iteration steps in Algorithm \ref{alg: mrc calculation}. Considering the different overheads for each iterative computation of the modules at different locations, for example, when calculating the last module's MRC value, it only requires forward-backward iteration of the last layer of parameters. Thus, the average total forward-backward iteration of each module is $n/2$. In our experiments, we set the learning rate as 1 and the iteration step as 10. Thus, in our experiments, the complexity of MRC algorithm cost $5n$ total forward-backward propagation.

\end{document}